\newcommand{\nop}[1]{}
\renewcommand{\min}[1]{\underset{#1}{\text{min}}}
\newcolumntype{C}[1]{>{\centering\arraybackslash}m{#1}}
\begin{document}
%
\conferenceinfo{e-Energy}{'14 Cambridge, UK}

\title{Plug and Play! A Simple, Universal Model for Energy Disaggregation}

\numberofauthors{3} 
\author{
\alignauthor Guoming Tang, Kui Wu\\
       \affaddr{Department of Computer Science}\\
       \affaddr{University of Victoria, Victoria, BC, Canada}\\
\alignauthor Jingsheng Lei\\
       \affaddr{School of Computer and Information Engineering}\\
       \affaddr{Shanghai University of Electric Power, China}\\
\alignauthor Jiuyang Tang\\
       \affaddr{Information Systems Engineering Lab}\\
       \affaddr{National University of Defense Technology, China}\\
}

\date{5 Jan. 2013}

\maketitle

\begin{abstract}
Energy disaggregation is to discover the energy consumption of individual appliances from their aggregated energy values. To solve the problem, most existing approaches rely on either appliances' signatures or their state transition patterns, both hard to obtain in practice. Aiming at developing a simple, universal model that works without depending on sophisticated machine learning techniques or auxiliary equipments, we make use of easily accessible knowledge of appliances and the sparsity of the switching events to design a Sparse Switching Event Recovering (SSER) method. By minimizing the total variation (TV) of the (sparse) event matrix, SSER can effectively recover the individual energy consumption values from the aggregated ones. To speed up the process, a Parallel Local Optimization Algorithm (PLOA) is proposed to solve the problem in active epochs of appliance activities in parallel. Using real-world trace data, we compare the performance of our method with that of the state-of-the-art solutions, including Least Square Estimation (LSE) and iterative Hidden Markov Model (HMM). The results show that our approach has an overall higher detection accuracy and a smaller overhead.   
\end{abstract}

\category{H.5.m}{Information Systems}{Information interfaces and presentation}[Miscellaneous]
\terms{Algorithm, Measurement, Experimentation}

\keywords{Energy Disaggregation, Data analysis, Optimization}

\section{Introduction}
Energy disaggregation, also known as non-intrusive appliance load monitoring (NIALM), aims to learn the energy consumption of individual appliances from their aggregated energy consumption values, \textit{e.g.}, the total energy consumption of a house. With accurate energy disaggregation, the household can 1) learn how much energy each appliance consumes, 2) take necessary actions to save energy, and 3) participate in utility demand response programs. Furthermore, with smart meters broadly deployed in many countries, sufficiently high resolution of energy data can be collected, making it feasible to develop efficient energy disaggregation solutions.

Due to its critical meaning, the energy disaggregation problem has attracted more and more attention since 1980s. Recently, it has also drawn attention from both large electronics companies and small start-ups, such as Intel, Belkin, GetEmme, and Navetas. While many methods have been developed for energy disaggregation, according to~\cite{zeifman2011nonintrusive}, no solutions work well for all types of household appliances. They either work poorly for new types of appliances or require complex machine learning methods to learn appliances' (latent) features. 

\subsection{Motivations}

We are motivated to develop a simple and broadly applicable solution for energy disaggregation, based on the following observations:  

\begin{itemize}
\item Most existing methods are based on appliances' energy usage patterns, also called the \emph{signatures} of appliances, which are hard to obtain without particular machine learning techniques or auxiliary measurements. For example, in~\cite{gupta2010electrisense}, extra equipments are needed to detect the activities of appliances based on high frequency electromagnetic interference (EMI).

\item The \emph{rated power} of an appliance is normally available in practice, from users' manual, technical specification or public web sites such as~\cite{EnergyStar2013}.   \nop{Fig.~\ref{fig:userManual} shows the power consumption of a microwave, which is specified in the user's manual\nop{\footnote{Available at \url{http://www.panasonic.co.uk/html/en_GB/Products/Kitchen+Appliances/Microwave+Combination/NN-CT579SBPQ/Specification/4923503/}}}. 

\begin{figure}[!ht]
\centering
\includegraphics[width=3.2in,height=1.5in]{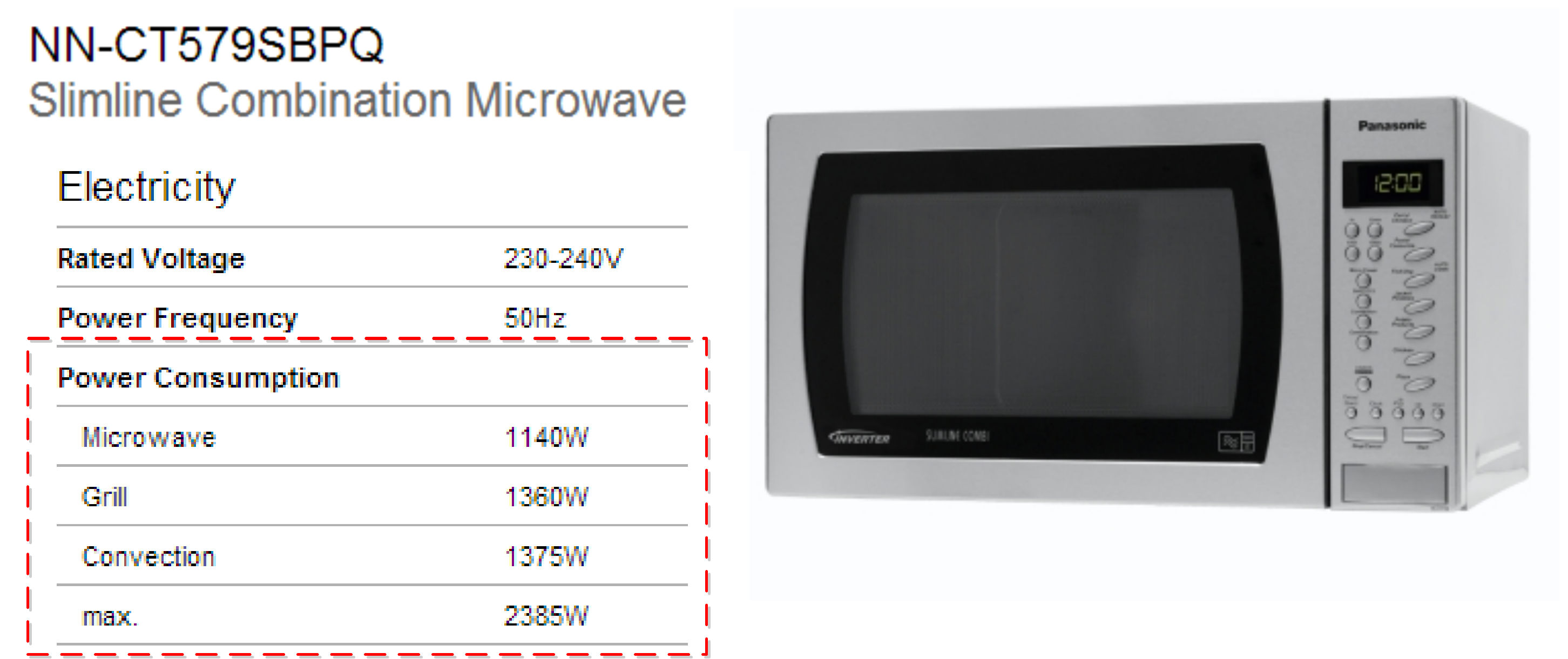}
\caption{Power consumption of a microwave specified in the user's manual}\label{fig:userManual}
\end{figure}
}
\item The temporal sparsity of the on/off switching events has been recognized as a general feature suitable for most appliances, \textit{i.e.}, an appliance cannot switch on/off many times in a very short time period. Nevertheless, this property has not been fully entrusted with an important post in energy disaggregation. To develop a universal solution for energy disaggregation, the sparsity feature could play an important role and should be regarded as a significant knowledge.

\end{itemize}

\subsection{Our Contributions}
Aiming at establishing an easy-to-use, universal model for energy disaggregation, we make the following contributions in this paper:
\begin{itemize}
\item We do not rely on appliances' \textit{signature}. Instead, we use the appliances' rated power and power deviation, which are easy to obtain, \textit{e.g.}, from the user's guide of appliances. With experimental evaluation, we show that the method is robust even if this information is not very accurate.

\item Based on the simple power model and the sparsity property of appliance activities, we establish a universal Sparse Switching Event Recovering (SSER) optimization model. Unlike existing methods that minimize the aggregated residual value, our method tries to minimize the total variation of on/off switching events. The new objective function, while very effective as we will show later, has \emph{never} been explored before to solve the energy disaggregation problem.

\item We develop a Parallel Local Optimization Algorithm (PLOA) to solve SSER, which can significantly reduce the computational complexity of the original problem and is guaranteed to obtain the optimal solution if some weak hypotheses hold. 

\item We build a small-scale energy monitoring platform for a group of household appliances, and evaluate our method using the real-world trace data collected over the platform. The experimental results indicate that our approach has an overall better performance than state-of-the-art solutions, including the well-known Least Square Estimation (LSE) method and a recently-developed machine learning method using iterative Hidden Markov Model (HMM).
\end{itemize}

\section{Related Work}

Tremendous research efforts have been devoted to solving the energy disaggregation problem. The existing approaches can be roughly divided into two categories: \emph{signature based} methods and \emph{state transition based} methods.

\subsection{Signature Based Methods}
Most approaches are  based on appliances' signatures, \textit{i.e.}, specific features such as the real/reactive power, current, and voltage of running appliances~\cite{hart1992nonintrusive}. These methods need the support of high sampling rate and build either steady or transient signal features of appliances with  labeled training datasets. The signal features are treated as the appliances' signatures~\cite{norford1996non,laughman2003power}, based on which event detection schemes are developed to detect appliances' on/off as well as different running states. The detected events are ascribed to certain appliances' activities via classification~\cite{dong2012event,suzuki2008nonintrusive,lam2007novel}. In addition to time-domain signal features,  spectral analysis has also been adopted to search for appliances' signatures in the frequency domain~\cite{leeb1995transient,shaw2008nonintrusive,gupta2010electrisense}.

The performance of signature based approaches depends greatly on the uniqueness of an appliance's signature. In practice, however, the signatures of different appliances may overlap with each other, causing inaccurate event detection. Even for the same type of appliances, it may be hard to obtain the widely acceptable signature~\cite{zeifman2011nonintrusive}. In other words, it is hard to generalize the signature learned from a particular device's operating data. Consequently, even though a method may have a good performance over a specific group of appliances, it may suffer in other datasets, caused by the over-fitting problem due to over-specific signatures. Due to these difficulties, it is not easy to use signature based methods for unambiguous appliance detection and classification.

\subsection{State Transition Based Methods}

A number of methods made use of state transition in appliances' activities for energy disaggregation. Recently, the Hidden Markov Model (HMM) was adopted to model the state transition patterns of appliances. The hidden states of each appliance at each time instant are predicted by inference algorithms, such as the Viterbi algorithm, with the observed emission probabilities~\cite{kim2012unsupervised,parson2012non}. Non-negative sparse coding was proposed to solve the energy disaggregation problem in~\cite{kolter2010energy}. It was further discussed in~\cite{figueiredo2013regularization}, in which a training process was needed to obtain the basis vector related to the state transition patterns of different appliances. Although some other works, such as~\cite{wang2012tracking}, were not to solve the energy disaggregation problem, they also utilized the appliance state transition information and their results may be helpful for energy disaggregation. 

The methods in this category usually need a large number of trainings, and thus are time consuming. In addition, the performance highly relies on the pattern of appliances' activities in the training datasets, and as such the performance may vary significantly from test to test. \newline

Finally, in the above two types of approaches, optimization algorithms were used to search for optimal solutions. Generally, the objective was to minimize the difference between the predicted value and real aggregation value. For example, in~\cite{suzuki2008nonintrusive}, Least Square Estimation (LSE) was used to find the tightest fit for the  aggregated waveform. Nevertheless, as we will disclose in this paper, such solutions usually do not match well the true switching events of appliances, leading to inaccurate energy disaggregation results.

\section{Sparsity of Switching Events}
For ease of reference, we list the notations in Table~\ref{tab:notation}. 

\begin{table}[!ht]
	\caption{Table of Notations}\label{tab:notation}
	\centering
		\begin{tabular}{m{1.2cm} m{6.4cm}}
		  \hline
			Symbol  &  Explanation\\
			\hline
			$S$ & state matrix \\
			$S^{(n)}$ & state vector of the $n$-th appliance along the timeline\\
			$S_{t}$ &  state vector of all appliances at time $t$\\
			$S^{(n)}_{t}$ & state of the $n$-th appliance at time $t$\\
			$S_{t:t+\ell}$ & states of all appliances from time $t$ to $t+\ell$\\
			$\Delta{S}$ & event matrix\\
			$\Delta{S}^{(n)}_{t}$ & switching event of the $n$-th appliance at time $t$\\
			$D$ & differential matrix \\
			$X$ & aggregated power vector\\
      $X^{(n)}$ & power readings of the $n$-th appliance along the timeline\\
      $X_{t}$ & aggregated power reading of all appliances at time $t$\\
      $X^{(n)}_{t}$ & power reading of the $n$-th appliance at time $t$\\
			$X_{t:t+\ell}$ & aggregated power readings of all appliances from time $t$ to $t+\ell$\\
			
			$I$ & stand-by power vector\\
      $I_{n}$ & stand-by power of the $n$-th appliance\\
			$P$ & rated power vector\\
      $P_{n}$ & rated power of the $n$-th appliance\\
			$P_{0}$ & baseline power of a house\\
      $\Theta$ & power deviation vector\\
      $\Theta_{n}$ & power deviation of the $n$-th appliance\\
			$W$ & set of active epochs\\
      $W_{k}$ & the $k$-th active epoch\\
			
			$\Gamma$ & set of power modes\\
      $\Gamma^{n}_{k}$ & the $k$-th power mode of the $n$-th appliance\\	
			\hline
		\end{tabular}
\end{table}

Fig.~\ref{fig:illustration_sparsityNbursty}\footnote{The figure is borrowed from~\cite{kolter2011redd} with slight modification for better illustration.} \nop{available at~\textit{http://redd.csail.mit.edu/}.} shows an example of energy consumption and appliances on/off switching events in a typical house during one day. From the figure, we can see that:
\begin{itemize}
\item As shown in Fig.~\ref{fig:illustration_sparsityNbursty}-a, the appliances do not switch on/off frequently in the whole time period.   
\item Most switching events happen in a small number of time intervals, which we call \emph{active epochs} (refer to Section~\ref{burstwindow} for formal definition) and are illustrated with shaded windows in Fig.~\ref{fig:illustration_sparsityNbursty}-b.
\end{itemize}

\begin{figure}[!ht]
\centering
\includegraphics[width=3.3in,height=2.8in]{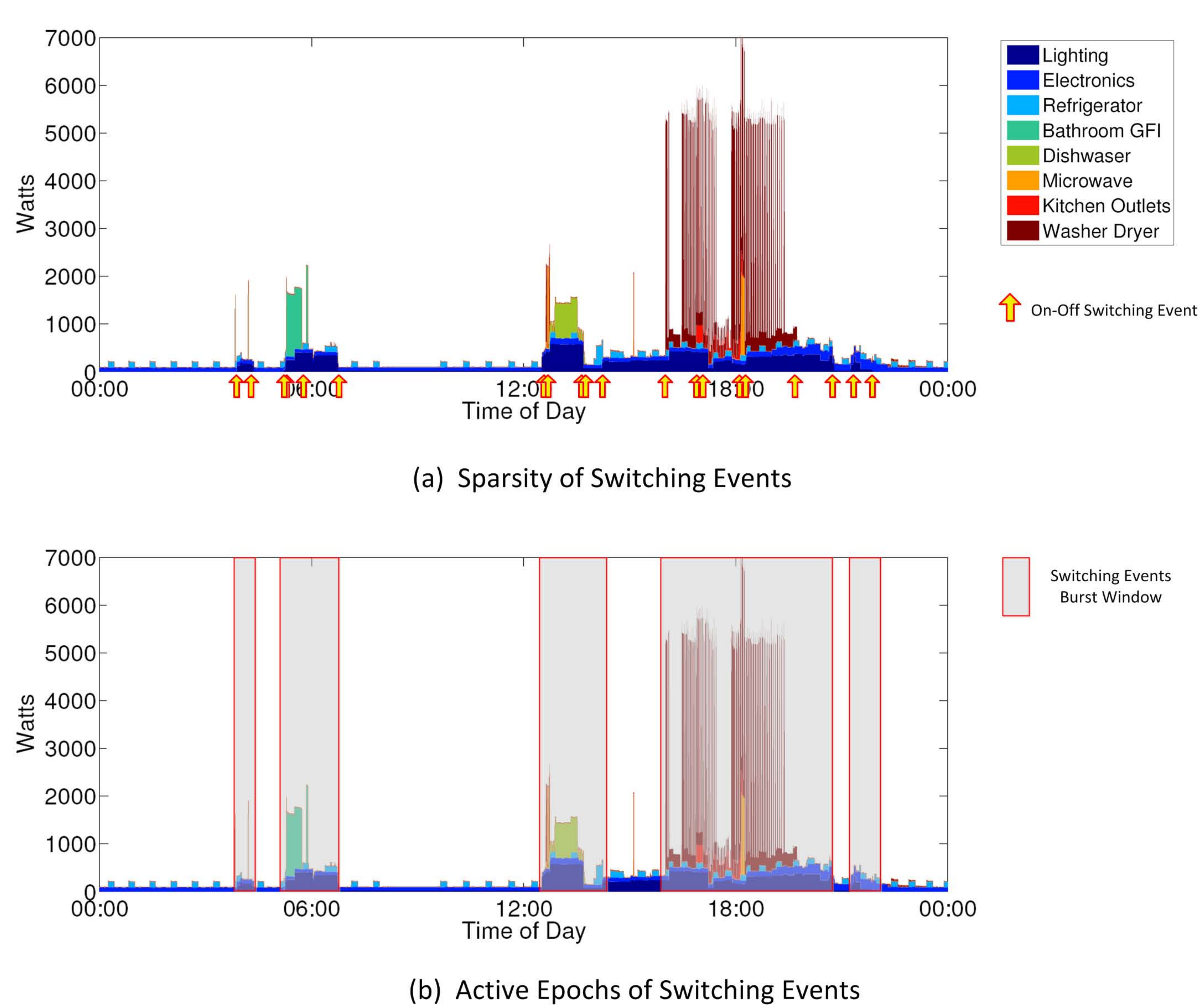}
\caption{Energy consumption and appliances' on/off switching events in a house over the course of a day~\cite{kolter2011redd}}\label{fig:illustration_sparsityNbursty}
\end{figure}

\nop{\subsection{State and Event Matrix}}

We denote the on/off states of $N$ appliances from time $t=1$ to $T$ with a \emph{state matrix}, $S$, defined as
\begin{equation}\label{StateMatrix}
S :=
  \begin{bmatrix}
    S^{(1)}_{1} & S^{(1)}_{2} & \cdots & S^{(1)}_{T}\\
    S^{(2)}_{1} & S^{(2)}_{2} & \cdots & S^{(2)}_{T}\\
		\vdots   & \vdots   & \ddots & \vdots  \\
		S^{(N)}_{1} & S^{(N)}_{2} & \cdots & S^{(N)}_{T}\\
  \end{bmatrix},
\end{equation}

\noindent in which $S^{(n)}_{t}$ represents the on/off state of the $n$-th appliance at time $t$, and $S^{(n)}_{t}\in \{0,1\}$ with $S^{(n)}_{t} = 1$ indicates the $n$-th appliance is on and $0$ otherwise. 

Then, the on/off switching events of the $N$ appliances from $t=2$ to $T$ can be indicated by an \emph{event matrix}, $\Delta{S}$, calculated as 
\begin{equation} \label{deltaS}
\Delta{S} = SD,
\end{equation}
\noindent where $D$ is a differential matrix with size of $N$-by-$(N-1)$:
\begin{equation}\label{eqt:DiffMatrix}
D :=
  \begin{bmatrix}
    -1 &    &       &   &    \\
     1 & -1 &       &   &    \\
		   &  1 &\ddots &   &    \\
		   &    &\ddots &-1 &    \\	
       &    &       & 1 & -1 \\	
			 &    &       &   &  1 \\	
  \end{bmatrix}
\end{equation}

The element of event matrix $\Delta{S}^{(n)}_{t}\in \{-1,0,1\}$, with $\Delta{S}^{(n)}_{t} = 1$ or $-1$ indicating a switching on or off event of the $n$-th appliance at time $t$, respectively, and $0$ no switching event. Since the sampling rate of current smart meters can reach $1$ to $10$ samples per second~\cite{CurrentCost2013}, we neglect the situation where an appliance has a series of switching events within a sampling interval, \textit{i.e.}, $|\Delta{S}^{(n)}_{t}| < 2$.

\textsc{Assertion $1.$} \emph{According to our real-world observations, $\Delta{S}$ is a sparse matrix.}

\section{System Model}

\subsection{Power Pattern}

We focus on the aggregated power readings of a number of appliances in a house, and arrange them from time $t=1$ to $T$ as an \emph{aggregated power vector},
\begin{equation}\label{eqt:AggLoad}
X := \left[X_{1},X_{2},\cdots,X_{T}\right]^{T}.
\end{equation}

Without loss of generality, all vectors in the paper are column vectors. Note that we slightly abuse the notation of $T$ to denote both time and the transpose of a vector/matrix. From the context, however, it is easy to figure out the difference, since when $T$ is used as the superscript of a vector/matrix, it always means the transpose of the vector/matrix in this paper.

The power pattern of an appliance indicates the energy consumption value when it is turned on or in stand-by state. In this paper, we use a simple power model which can be easily obtained from the user's guide or the specification of an appliance. We represent the power pattern of an appliance $n$ by a tuple $(I_n, P_{n},\Theta_{n})$, where $I_n$ is its stand-by power, $P_{n}$ is its rated power, and $\Theta_{n}$ is its power deviation.

Assume that a house is equipped with $N$ appliances. We define a \emph{stand-by power vector} to represent their stand-by powers as
\begin{equation}
I := \left[I_{1},I_{2},\cdots,I_{N}\right]^{T},
\end{equation}
a \emph{rated power vector} to represent their rated powers as
\begin{equation}
P := \left[P_{1},P_{2},\cdots,P_{N}\right]^{T},
\end{equation}
\noindent and a \emph{power deviation vector} to represent their power deviations as
\begin{equation}
\Theta := \left[\Theta_{1},\Theta_{2},\cdots,\Theta_{N}\right]^{T}.
\end{equation}

\nop{For a typical household, there is very low energy consumption out of active epochs, caused by stand-by or always-on appliances.}
\begin{definition}
Given a house with a certain number of appliances, we call the sum of the appliances' stand-by power, denoted by $P_{0}$, as the \textit{baseline power} of the house, \textit{i.e.,} $P_0 = \left\| I \right\|_{1}$.
\end{definition}

Note that virtually all appliances' stand-by power could be found from users' manual, technical specification or public websites such as~\cite{EnergyStar2013, Standby2014}. Theoretically, $P_{0}$ should be constant, which is the minimum power of the house at any time instant. In practice, however, there are small variations in $P_0$ due to inaccurate stand-by power specification, thus it is possible that the actual power could be below the baseline power. 

At time instant $t$, given the state vector of all appliances $S_t$, the aggregated power reading $X_{t}, (t=1, 2, \ldots, T)$, is bounded by:
\nop{by $\left[S_t^T\left(P-\Theta\right), S_t^T\left(P+\Theta\right)\right]$, \textit{i.e.}, }
\begin{equation}
\begin{aligned}
& (\textbf{1} - S_t)^T I + S_t^{T}(P-\Theta) \leq X_t, \\ 
& (\textbf{1} - S_t)^T I+ S_t^{T}(P+\Theta) \geq X_t,
\end{aligned}
\end{equation}
where $\textbf{1}$ is the all-one vector. In other words, the following constraints hold:

\begin{equation}\label{eqt:Constraints}
\begin{aligned}
&  X-S^{T}(P+\Theta) - (\mathbb{I} - S)^T I \leq \textbf{0}, \\
&  S^{T}(P-\Theta) + (\mathbb{I} - S)^T I - X \leq \textbf{0},
\end{aligned}
\end{equation}
where $\mathbb{I}$ is the $N$-by-$T$ all-one matrix.

\subsection{Sparse Switching Events Recovering}

In order to solve the energy disaggregation problem, we transform the original problem, which aims at breaking up the aggregated power readings to individual appliance at each time instant, to an alternative one, which aims at recovering the on/off states of individual appliance at each time instant. Thus, the problem is formally defined as:

\begin{itemize}
\item \textbf{Input:} Aggregated power vector $X$, power pattern $(I, P, \Theta)$. 
\item \textbf{Output:} State matrix $S$, \textit{i.e.}, the on/off states of all appliances along the timeline.
\end{itemize}

A \textbf{Sparse Switching Event Recovering (SSER)} model is established to recover the states of $N$ appliances from  time $t=1$ to $T$.
\begin{equation}\label{eqt:Minimization_1}
\begin{aligned}
&  \min{S}
& & \textbf{TV}(\Delta S)  \\
& \textit{s.t.}
& & X-S^{T}(P+\Theta) - (\mathbb{I} - S)^T I \leq \textbf{0}, \\
&&&  S^{T}(P-\Theta) + (\mathbb{I} - S)^T I - X \leq \textbf{0},
\end{aligned}
\end{equation}
\noindent where $\Delta S$ is defined by (\ref{deltaS}) and $\textbf{TV}(\cdot)$ denotes the \emph{total variation} of the event matrix calculated by 
\begin{equation}\label{eqt:XForm_1}
\textbf{TV}(\Delta S) := \sum_{n=1}^{N}\sum_{t=1}^{T}\left|\Delta{S}^{(n)}_{t}\right|.
\end{equation}

After obtaining the on/off states of each appliance along the timeline, we can estimate its power readings with its rated power at each time instant. Therefore, we can get an approximate estimation of the power consumption of each appliance. This is equivalent to solving the original energy disaggregation problem.

\subsection{On TV Minimization}

The total variation (TV) minimization is a classical approach to recovering a sparse matrix. It has been widely applied in signal restoration, image denoising, and compressive sensing~\cite{chambolle1997image,osher2003image}. To the best of our knowledge, however, it has not been explored in the context of energy disaggregation. 

Compared with the dictionary based sparse decoding~\cite{kolter2010energy}, our method does not need a training process to get the basis functions, and the recovered matrix with our method has an explicit meaning in practice. In addition, unlike other optimization methods, such as least square fitting~\cite{suzuki2008nonintrusive}, total variation minimization is a type of least absolute deviations fitting, which has been proved to be more robust for various applications~\cite{boyd2011distributed}. \nop{Considering the information on rated power might not be very accurate, the TV model matches our problem well, as shown in our later robust test results. and is expected to solve energy disaggregation precisely.}

\section{Parallel Local Optimization\\ Over Active Epochs} \label{sec:PLOA}

In this section, we first analyze the hardness of solving SSER. To solve the problem efficiently, we then propose a Parallel Local Optimization Algorithm (PLOA) by splitting the whole timeline into multiple active epochs. 

\subsection{Hardness of SSER}

There were significant research efforts to solve the total variation minimization problem~\cite{chambolle2004algorithm,figueiredo2006total}. Nevertheless, the form of total variation in our case is a discrete version and involves integer variables. Since $S^{(n)}_{t}\in\{0,1\}$, our problem belongs to binary programming, which is much harder to solve than the one with real variables. 

With $T$ aggregated power readings generated by $N$ appliances, to obtain the optimal solution via a brute-force method, it can be shown that the computational complexity is $O\left(2^{N\cdot{T}}\right)$ (see Appendix~\ref{Appendix_B}), which is exponential. Furthermore, we can show that solving SSER is NP-hard (see Appendix~\ref{Appendix_A}). Since $T$ is very big in practice, it seems not possible to find an efficient algorithm that outputs the optimal solution. Nevertheless, the active epochs of on/off events suggest that we can perform optimization in a smaller, local time window.\nop{ which inspires us to design an algorithm to solve the problem perfectly.}

\subsection{Detection of Active Epochs}\label{burstwindow}

\begin{definition} An \textit{active epoch} of a house is defined as a time interval from the time when the aggregated power of the house jumps above the baseline power until the time when the aggregated power drops below the baseline power. 
\end{definition}

\begin{figure}[!ht]
\centering
\includegraphics[width=3.2in,height=1.0in]{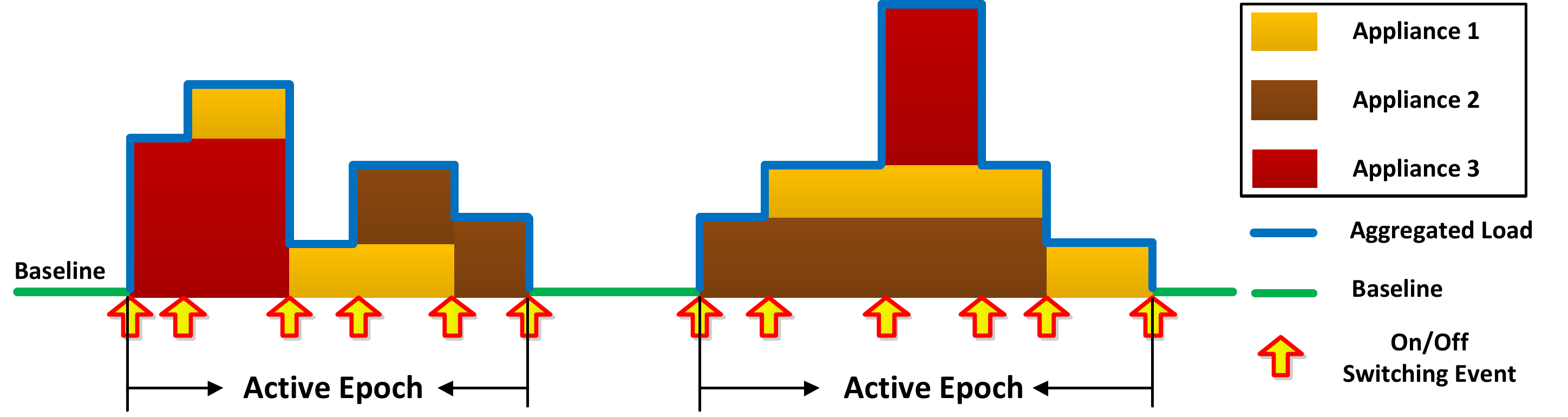}
\caption{A sketch map to illustrate the concepts of active epoch and baseline power using three appliances}\label{fig:illustration_baselineNburstwindow}
\end{figure}

Fig.~\ref{fig:illustration_baselineNburstwindow} is a sketch map of switching activities and power readings of three appliances with constant power, in which the concepts of baseline power and active epoch are illustrated. \nop{Therefore, under our definition and assumption, we can detect active epochs along the timeline by the following strategy.}

Algorithm~\ref{alg_windowDet} shows the pseudo code of detecting active epochs. 

\begin{algorithm}[!ht]
\caption{Active Epoch Detection}\label{alg_windowDet}
\begin{algorithmic}[1]
\Require Aggregated power vector $X$, baseline power $P_{0}$.
\Ensure Set of Active epochs, $W$.
\State $t = 1,k = 0$
\While {$t \leq T$}
	\State $start = end = t$
	\While {$X_{end} > P_{0}$ and $end < T$}
		\State $end = end + 1$
	\EndWhile
	\If {$end > start$}
		\State $k = k + 1$
		\State $W_{k} = [start, end]$
	\EndIf
	\State $t = end + 1$
\EndWhile
\State \textbf{return} $W = \{W_{1},W_{2},\cdots,W_{k}\}$
\end{algorithmic}
\end{algorithm}

\subsection{Parallel Local Optimization Algorithm}

Without loss of generality, we take aggregated load data of $N$ appliances from time $t = 1$ to $T$ as an example to show the major steps of PLOA.

\textbf{Step 1:} Detect all active epochs along the timeline with Algorithm~\ref{alg_windowDet}. Denote the set of active epochs as $W = \{W_{1},W_{2},\cdots,W_{k}\}$. 

\textbf{Step 2:} In the active epoch starting at $t$ with the length of $\ell$, solve the following optimization problem to obtain $S_{t:t+\ell}$.
\begin{equation}\label{eqt:Minimization_2}
\begin{aligned}
& \min{S_{t:t+\ell}} \textit{   } \textbf{TV}(S_{t:t+\ell}D_{t:t+\ell})  \\
& \textit{s.t.} \\
& X_{t:t+\ell}-(S_{t:t+\ell})^{T}(P+\Theta) - (\mathbb{I}_{t:t+\ell} - S_{t:t+\ell})^T I \leq \textbf{0}, \\
& (S_{t:t+\ell})^{T}(P-\Theta) + (\mathbb{I}_{t:t+\ell} - S_{t:t+\ell})^T I - X_{t:t+\ell} \leq \textbf{0},
\end{aligned}
\end{equation}
where $S_{t:t+\ell}$ is a $N$-by-$\ell$ submatrix of $S$, $D_{t:t+\ell}$ is a $\ell$-by-$(N-1)$ submatrix of $D$, $\mathbb{I}_{t:t+\ell}$ is a $N$-by-$\ell$ submatrix of $\mathbb{I}$, and $X_{t:t+\ell}$ is a vector containing the aggregated power readings of all appliances from time $t$ to time $t+\ell$. 

\textbf{Step 3:} Perform Step $2$ on the $k$ active epochs to obtain a group of $k$ solutions in parallel. Since outside of active epochs, appliances are considered as stand-by, a complete state matrix $S_{1:T}$ can thus be built. \nop{in which the $t$-th ($1\leq t \leq T$) column is: \nop{(supposing the idle state is $S_{0}$)}

\begin{equation}\label{eqt:SLOAsolution}
\hat{S}_{t} = \left\{ 
	\begin{array}{ll} 
		S_{t} \text{ , } t \in W,\\
		S_{0} \text{ , } t \notin W.
	\end{array} \right.
\end{equation}
}

We can show that the computational complexity to solve (\ref{eqt:Minimization_2}) is $O(2^{N\cdot{\ell}})$ (see Appendix~\ref{Appendix_B}). Since $\ell \ll T$ as shown in Fig~\ref{fig:illustration_sparsityNbursty}, the problem can be solved efficiently, using tools such as \emph{CVX} $2.0$ with a \emph{Gurobi} engine~\cite{cvx2013}. \nop{ Algorithm~\ref{alg_seqentialOpt} shows the pseudo code of PLOA, in which \textit{parfor}\footnote{With support of \emph{Parallel Computing Toolbox}, \emph{parfor} loop can be applied in \emph{MATLAB} and bring in multiple \textit{workers} to execute codes parallelly.} is a notation of parallel computing instead of \emph{for}. 


\begin{algorithm}[!ht]
\caption{Parallel Local Optimization}\label{alg_seqentialOpt}
\algblock{ParFor}{EndParFor}
\algnewcommand\algorithmicparfor{\textbf{parfor}}
\algnewcommand\algorithmicpardo{\textbf{do}}
\algnewcommand\algorithmicendparfor{\textbf{end\ parfor}}
\algrenewtext{ParFor}[1]{\algorithmicparfor\ #1\ \algorithmicpardo}
\algrenewtext{EndParFor}{\algorithmicendparfor}

\begin{algorithmic}[1]
\Require Aggregated power vector $X$, rated power vector $P$ and deviation vector $\Theta$, active epoch set $W$.
\Ensure State matrix $\hat{S}$.

\ParFor {$n = 1:k$}
	\State $t = W_{k}[1]$
	\State $\ell = W_{k}[2] - W_{k}[1]$
	\State Solve (\ref{eqt:Minimization_2}) to obtain $S_{1:1+\ell}$
	\State $\hat{S}_{t:t+\ell}=S_{1:1+\ell}$
\EndParFor
\State \textbf{return} $\hat{S} = \{\hat{S}_{t} \text{ fits (\ref{eqt:SLOAsolution})}\}$.
\end{algorithmic}
\end{algorithm}
}
\newtheorem{theorem}{Theorem}
\begin{theorem}
Assume that the global optimal solution to SSER in (\ref{eqt:Minimization_1}) is $S^{*}$, and the solution obtained from POLA is $\hat{S}$, if both solutions are unique, then $\hat{S} = S^{*}$.
\end{theorem}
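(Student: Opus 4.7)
The plan is to show that the global optimum $S^{*}$ must agree with $\hat{S}$ on every time instant by first pinning down the state outside active epochs and then arguing that the objective decouples across epochs.

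First I would establish the following \emph{inactive-epoch lemma}: in any feasible solution $S$, at every time $t$ lying outside an active epoch (where $X_t \le P_0$), the column $S_t$ must equal $\mathbf{0}$. The argument uses the lower-bound constraint in~(\ref{eqt:Constraints}): if some $S^{(n)}_t = 1$, then substituting into the constraint forces $P_n - \Theta_n \le I_n + (X_t - P_0)$, which, since $X_t \le P_0$ outside active epochs, contradicts the implicit modeling assumption $P_n - \Theta_n > I_n$ (otherwise an ``on'' state would be indistinguishable from stand-by). So outside active epochs every feasible solution, in particular $S^{*}$, matches the trivial all-off column used by PLOA in Step~3.

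Next I would show the objective decomposes over active epochs. Because $\textbf{TV}(\Delta S) = \sum_{n,t} |\Delta S^{(n)}_t|$ only couples time steps $t$ and $t-1$, once the columns of $S$ at the instants bordering (and outside) each active epoch are fixed to $\mathbf{0}$ by the lemma above, the contributions of disjoint active epochs to $\textbf{TV}(\Delta S)$ become independent. Concretely, for an active epoch $W_k = [t, t+\ell]$, the terms $|\Delta S^{(n)}_{\tau}|$ with $\tau \in \{t, t+1, \ldots, t+\ell\}$ depend only on the columns $S_{t-1}, S_{t}, \ldots, S_{t+\ell}$, and the first and last of these are $\mathbf{0}$ by the lemma. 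The same is true for the feasibility constraints in~(\ref{eqt:Constraints}), which are per-column. Hence the SSER objective and feasible set factor as a sum/product over epochs, and the restriction of any global optimum to an epoch $W_k$ must itself minimize the corresponding local problem~(\ref{eqt:Minimization_2}).

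Uniqueness then finishes the argument: PLOA, by construction, returns for each $W_k$ an optimal solution of the local problem~(\ref{eqt:Minimization_2}); the restriction $S^{*}|_{W_k}$ is also optimal for the same local problem. If the global solution is unique, so is each local one (any local minimizer can be extended by stand-by outside its epoch to a global minimizer), hence $\hat{S}|_{W_k} = S^{*}|_{W_k}$. Combined with agreement outside active epochs, we obtain $\hat{S} = S^{*}$.

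The main obstacle I anticipate is the inactive-epoch lemma: it needs the unstated but necessary assumption that the ``on'' power band $[P_n - \Theta_n, P_n + \Theta_n]$ lies strictly above $I_n$ for every appliance, so that no feasible configuration outside an active epoch can have any appliance turned on. A secondary subtlety is the boundary columns of each active epoch: one must check that the differences $\Delta S^{(n)}_t$ at the first instant of an epoch (transitioning from the all-off column just before) and just after the epoch ends are correctly accounted for under the same decomposition, which follows from the fact that those boundary columns are pinned by the lemma and therefore contribute a constant to each local objective.
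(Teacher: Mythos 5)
Your proposal reaches the same conclusion by essentially the same underlying idea as the paper --- the problem decouples across active epochs, so the restriction of the global optimum to each epoch must solve the local problem~(\ref{eqt:Minimization_2}), and uniqueness forces agreement --- but you package it as a direct decomposition with an explicit \emph{inactive-epoch lemma}, whereas the paper runs a terse exchange argument by contradiction: it assumes $\hat{S}_{t:t+\ell} \neq S^{*}_{t:t+\ell}$, splices the local optimum into $S^{*}$ to build $S^{**}$ with no larger TV, and contradicts uniqueness of $S^{*}$. Your version is more careful on two points the paper glosses over. First, the paper never argues that $S^{*}$ itself is all-off outside active epochs (it only observes that $\hat{S}$ has zero TV there), so its final claim that $\hat{S}_t = S^{*}_t$ for $t$ outside epochs is unsupported as written; your lemma, with the explicitly identified hidden assumption $P_n - \Theta_n > I_n$, supplies exactly the missing feasibility argument. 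Second, the paper never checks that the spliced $S^{**}$ remains feasible (it does, since the constraints in~(\ref{eqt:Constraints}) are per-column), which you handle by noting the per-column structure.

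One wrinkle in your argument deserves attention: you assert that the boundary columns, being pinned to $\mathbf{0}$, ``contribute a constant to each local objective.'' That is true for the transition \emph{out} of an epoch (Algorithm~\ref{alg_windowDet} includes the first below-baseline instant, whose column is forced to $\mathbf{0}$), but the transition \emph{into} an epoch contributes $\sum_n \bigl|S^{(n)}_{t} - 0\bigr| = \lVert S_t\rVert_1$, which depends on the decision variable $S_t$ and is \emph{not} part of the local objective $\textbf{TV}(S_{t:t+\ell}D_{t:t+\ell})$, which sums only the internal differences. So the global objective restricted to an epoch and the local objective in~(\ref{eqt:Minimization_2}) differ by a non-constant term, and the decomposition is not exact as stated. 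To be fair, the paper's exchange argument has precisely the same unaddressed boundary term (its inequality $S^{**}D \leq S^{*}D$ silently assumes the splice does not increase the TV at the seams), so this is a shared gap rather than a defect unique to your write-up; a clean fix is to include the entering and exiting boundary differences in the local objective, after which both your decomposition and the paper's splice go through.
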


\begin{proof}

For an arbitrary active epoch starting at $t$ with the length of $\ell$, assume that $\hat{S}_{t:t+\ell}$ is the unique local optimal solution obtained via (\ref{eqt:Minimization_2}). Assume that the global optimal solution to SSER in (\ref{eqt:Minimization_1}) is $S^{*}$. Assume that the sub-matrix constructed by the $t$-th to $(t+\ell)$-th columns of $S^{*}$  is $S^{*}_{t:t+\ell}$. We prove the theorem by contradiction. 

Assume that 
\begin{equation}\label{eqt:assumption}
\hat{S}_{t:t+\ell} \neq S^{*}_{t:t+\ell}.
\end{equation}

Then, the following inequality must hold
\begin{equation}
S^{*}_{t:t+\ell}D_{t:t+\ell} \geq \hat{S}_{t:t+\ell}D_{t:t+\ell}.
\end{equation}

Therefore, there must exist another global solution $S^{**}$, in which the $j$-th column is
\begin{equation}
S^{**}_{j} = \left\{ 
	\begin{array}{ll} 
		\hat{S}_{j} \text{ , } j\in[t,t+\ell], \\
		  S^{*}_{j} \text{ , } j\notin[t,t+\ell],
	\end{array} \right.
\end{equation}

\noindent such that
\begin{equation}\label{eqt:contradictory}
S^{**}D \leq S^{*}D.
\end{equation}

Obviously, (\ref{eqt:contradictory}) is contradictory to the assumption that $S^{*}$ is the \textit{uniquely} global optimal solution to SSER. Therefore, the assumption (\ref{eqt:assumption}) is not true. As a result, we have
\begin{equation}
\hat{S}_{t:t+\ell} = S^{*}_{t:t+\ell}.
\end{equation}

\nop{Similarly, we can prove that any local optimal solution obtained via PLOA is equal to the corresponding sub-matrix of the global optimal solution.} Outside the active epochs, PLOA treats all appliances as stand-by, the TV value is $0$ in $\hat{S}$.  Since the TV value cannot be negative,  the TV value obtained with PLOA is the minimum and must be the same as that obtained with the global optional solution. \nop{in the optimal solution $S^{*}$. According to (\ref{eqt:SLOAsolution}), $S_{0}$ is also included in $\hat{S}$. }

Overall, if the global optimal solution is unique, for any time instant $t$, no matter whether $t$ is in an active epoch or outside active epochs, the state vector $\hat{S}_{t} \in \hat{S}$ must be equal to the state vector $S^{*}_{t} \in S^{*}$, which means $\hat{S} = S^{*}$.
\end{proof}

\subsection{Algorithm Analysis}

Given $T$ aggregated power readings generated by $N$ appliances that can be broken into $k$ active epochs with maximum size $w$, the computational complexity of the original SSER problem~(\ref{eqt:Minimization_1}) is $O(2^{N\cdot{T}})$. With PLOA, solving the local optimization problem~(\ref{eqt:Minimization_2}) $k$ times results in the time complexity upper bounded by $O(k\cdot 2^{N\cdot{w}})$ (see \emph{Appendix B}). Considering that the number of appliances $N$ is constant and $w \ll T$, PLOA significantly cuts down the computational complexity.

Obviously, the larger the value of $w$, the higher the computational complexity. Fortunately, in practice, each active epoch in a house is usually not long. \nop{Therefore, in practice, PLOA can solve the NP-hard problem efficiently.} As we will show in later experiments, PLOA can indeed provide satisfied solutions.

\section{Data Collection \nop{and Pre-processing}}

\begin{figure*}[!ht]
\centering
\includegraphics[width=6.5in,height=1.8in]{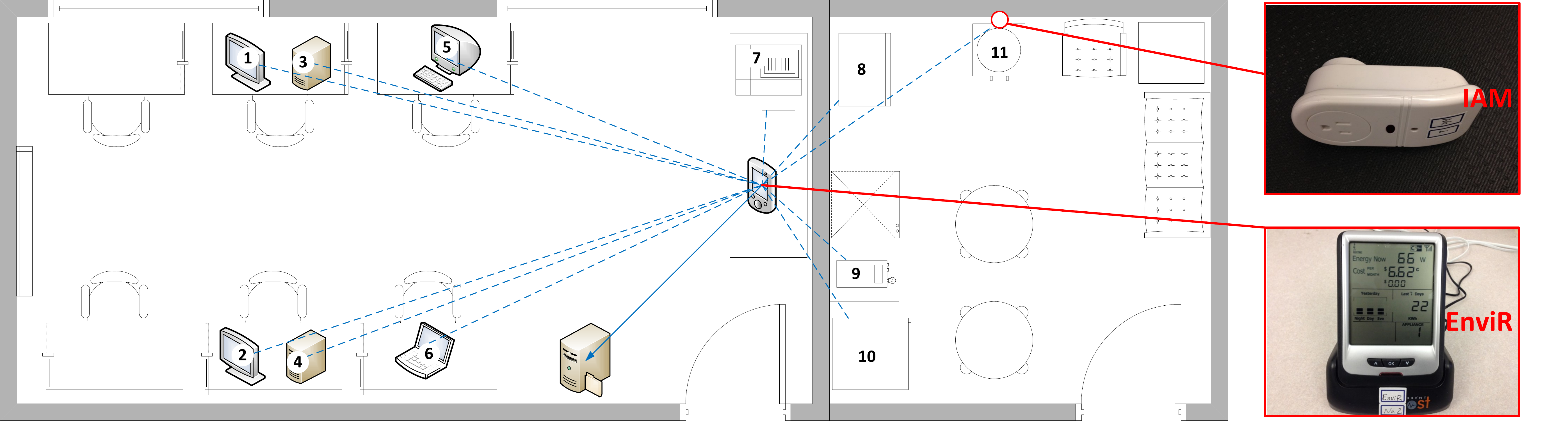}
\caption{Energy monitoring platform, monitored appliances and measuring devices}\label{fig_appDeployment}
\end{figure*}

\subsection{Energy Monitoring Platform}\label{subsec:dataCollection}

We evaluated our method with real-world trace data from our energy monitoring platform.\nop{\footnote{We are unable to make use of REDD datasets, because they 1) lack phase information of sources, 2) are rather circuit-oriented than appliance-oriented, and 3) contain duplicate labels.}.} We monitored the appliances' energy consumption in a typical laboratory and a lounge room in the fifth floor of Engineering/Computer Science building at the University of Victoria (UVic).

Using an off-the-shelf solution developed by Current Cost (\textit{http://www.currentcost.com}), we recorded the real-time power of laptops, desktops and some household appliances. Each appliance's real power was measured every $6$ seconds by the device called Individual Appliance Monitor (IAM), and the measurement results were transmitted via wireless to a display server (EnviR), which can display and temporarily store the collected data. Then, the data in EnviR were sent to our data server.\nop{ every $10$ seconds.} The platform, the monitored appliances, and the measuring devices are shown in Fig.~\ref{fig_appDeployment}.

We collected the data for three months, and one-week data were used for performance evaluation in Section~\ref{sec:evaluation}. 

\subsection{Power Splitting}\label{subsec:powerSplitting}

If the power range of appliance $A_{1}$ largely overlaps with the power range of appliance $A_2$, given a power value in the overlapping range, it would be hard to decide which appliance is on. To alleviate this problem, we should reduce the overlapping power range of two different appliances. This is achieved by a power splitting method as follows. 

We can find that some simple appliances like a bulb or a stove, once turned on, usually have stable power readings with small fluctuation. In contrast, complex appliances such as refrigerator usually have multiple working modes, and the power readings at each mode tend to be stable. Therefore, we can split the power consumption of complex appliances into multiple modes, each of which is regarded as a \textit{virtual appliance}. With such power splitting, the power overlaps among virtual appliances can be narrowed down significantly.

With readily available appliance information from user's manual or public websites such as~\cite{EnergyStar2013}\nop{\textit{e.g.}, the rated power of a kind of microwave illustrated in Fig.~\ref{fig:userManual}\nop{\footnote{Available at \url{http://www.panasonic.co.uk/html/en_GB/Products/Kitchen+Appliances/Microwave+Combination/NN-CT579SBPQ/Specification/4923503/}}}, or public websites with expertise~\cite{EnergyStar2013}}, we can easily split the power range of an appliance. The splitting result in our test scenario is given in Table.~\ref{tab:powersplitting}, where the values of power deviations are estimated from the collected power data of each appliance. One may be concerned that the estimation of power deviation in practice is inaccurate. With experimental study, however, our method is resilient to inaccurate power deviation estimations as shown in Section~\ref{sec:evaluation}.

\nop{
\begin{figure}[!ht]
\centering
\includegraphics[width=3.2in,height=1.5in]{userManual.pdf}
\caption{Power consumption of a kind of microwave specified in the user manual}\label{fig:userManual}
\end{figure}
}

\begin{table}[!ht]
	\caption{Results of power splitting for each appliance}\label{tab:powersplitting}
	\centering
	\begin{small}
		\begin{tabular}{|c|m{1.5cm}|m{0.6cm}|m{1.2cm}|m{1.3cm}|m{1.2cm}|}
			\hline
			ID & Appliance & Mode & Rated Power (\emph{Watts}) & Power Deviation (\emph{Watts}) & Stand-by Power (\emph{Watts})\\
			\hline
			\hline
			 1 & LCD-Dell & 1 & 25 & 5 & 0 \\
			\hline
			 2 & LCD-LG & 1 & 22 & 5 & 0 \\
			\hline
			 3 & Desktop & 1 & 40 & 15 & 3 \\
			   &         & 2 & 50 & 20 & \\
			\hline
			 4 & Server & 1 & 130 & 20 & 10\\
			\hline
			 5 & iMac & 1 & 35 & 5 & 3\\
			   &      & 2 & 50 & 10 & \\
			\hline
			 6 & Laptop & 1 & 15 & 5 & 1 \\
			   &        & 2 & 30 & 10 & \\
				 &        & 3 & 70 & 10 & \\
			\hline
			 7 & Printer & 1 & 400 & 50 & 2\\
			   &         & 2 & 700 & 80 & \\
				 &         & 3 & 900 & 100 & \\
			\hline
			 8 & Microwave & 1 & 1000 & 100 & 2\\
			   &           & 2 & 1200 & 100 & \\
				 &           & 3 & 1700 & 100 & \\
			\hline
			 9 & Coffee Maker & 1 &  700 & 100 & 2\\
			   &              & 2 &  900 & 100 & \\
				 &              & 3 & 1100 & 100 & \\
			\hline
			 10 & Refrigerator & 1 & 115 & 15 & 5\\
					&              & 2 & 350 & 10 & \\
			\hline
			 11 & Water Cooler & 1 & 65 & 5 & 3\\
					&              & 2 & 380 & 10 & \\
					&              & 3 & 450 & 10 & \\
			\hline
		\end{tabular}
	\end{small}
\end{table}

After power splitting, the original state vector of an appliance with $k$ modes is extended to a state matrix with $k$ rows, each representing the state vector of a virtual appliance. In consequence, the sum of states of multiple virtual appliances split from the same real appliance may be larger than one.  To avoid this problem, we add an extra integer constraint in our model if the $n$-th appliance has $k$ different modes:
\begin{equation}\label{eqt:added_constraint}
\sum_{i=1}^{k}S^{(\Gamma^{n}_{i})}_{t}\leq 1, 
\end{equation}
\noindent where $\Gamma^{n}$ is the set of modes of the $n$-th appliance and $\Gamma^{n}_{i}$ is the corresponding row number in state matrix for mode $i$. Thus, $S^{(\Gamma^{n}_{i})}_{t} = 1$ indicates that at time $t$ the $n$-th appliance is turned on and working in mode $i$; otherwise, $S^{(\Gamma^{n}_{i})}_{t} = 0$. The constraint in (\ref{eqt:added_constraint}) means that at any time instant, the appliance can only work in one mode. 

\section{Experimental Evaluation}\label{sec:evaluation}
Using real-world trace data collected from our energy monitoring platform, we evaluate our method 
by 1) comparing its performance with others', and 2) testing its robustness with inaccurate parameters.

\subsection{Comparison}
For comparison purpose, we also implement and test another two methods: the Least Square Estimation based integer programming method~\cite{suzuki2008nonintrusive} and the iterative Hidden Markov Model~\cite{parson2012non}. The former is a signature based approach, while the latter is a state transition based approach.

\subsubsection{Least Square Estimation Based Integer\\Programming}
The Least Squire Estimation (LSE) based integer programming method was adopted in~\cite{suzuki2008nonintrusive}. The current waveform of each appliance was extracted and stored beforehand, and treated as its signature for energy disaggregation. Since it needs extra devices to obtain the appliances' current waveform, for a fair comparison, we use the rated power listed in Table~\ref{tab:powersplitting} instead of current waveform as the appliances' signatures and implement the LSE based algorithm as in~\cite{suzuki2008nonintrusive}. To be specific, with the same notations mentioned in~(\ref{eqt:Minimization_1}) and~(\ref{eqt:added_constraint}), the LSE-based method in our scenario is formally defined as:
\begin{equation}\label{eqt:Minimization_3}
\begin{aligned}
&  \min{S}
& & \left\|X-S^{T}P\right\|_{2}\\
& \textit{s.t.}
& & S^{(\Gamma^{n}_{i})}_{t} \in\{0, 1\}, \sum_{i=1}^{k}S^{(\Gamma^{n}_{i})}_{t}\leq 1,\\
&&& 0 \leq n \leq N, 0 \leq t \leq T.
\end{aligned}
\end{equation}

By solving the above optimization problem, we can get the states (modes) of appliances at each time instant, and estimate the energy consumption of each appliance using its rated power.

\subsubsection{Iterative Hidden Markov Model}
As a state transition based method, the iterative Hidden Markov Model (HMM) was proposed and tested for energy disaggregation in~\cite{parson2012non}. \nop{Compared with the traditional HMM, the iterative HMM used a variant of the difference HMM, and the step changes in aggregated data and the states (with multiple modes) of each appliance were modeled as the observed and hidden variables, respectively. } We implement this method in three phases: the modelling phase, the training phase, and the inference phase.
\begin{itemize}
\item In the modeling phase, each appliance is modelled as a prior difference HMM, which is defined by
\begin{equation}
\lambda := \{A, B, \pi\},
\end{equation}
\noindent where $A$ is the prior state transition probability distribution, $B$ is the emission probability distribution, and $\pi$ is the starting state distribution of the appliance. In particular, 1) $A$ is initialized with the transition probabilities proportional to the time spent in each state, and 2) for any change between states (or modes) $\Gamma^{n}_i$ and $\Gamma^{n}_j$ of the $n$-th appliance, its corresponding emission probability in $B$ is defined by a Gaussian distributed power consumption $\mathcal{N}(P_{\Gamma^{n}_i}-P_{\Gamma^{n}_j},\Theta^2_{\Gamma^{n}_i}+\Theta^2_{\Gamma^{n}_j})$, where $P_{\Gamma^{n}_i}$ and $\Theta_{\Gamma^{n}_i}$ denote the rated power and power deviation of the $i$-th appliance under state (or mode) $i$, respectively.
\item In the training phase, the prior appliance model $\lambda$ is tuned by running the expectation maximization (EM) algorithm over the collected load data~\cite{parson2012non}.  The EM algorithm is initialized with the prior state transition matrix $A$ and individual appliances' rated power in Table~\ref{tab:powersplitting}. It terminates when a local optima in the log likelihood function is found or the maximum number of iterations ($100$ in our implementation) is reached.
\item In the inference phase, the extended Viterbi algorithm~\cite{parson2012non} was applied to infer each appliance's states (or modes), considering the constraints of aggregated power and power changes at each time instant.
\end{itemize}

By running the above three phases iteratively on each appliance, we can get the estimated states as well as the power consumption of each appliance at each time instant.

\begin{table*}[!ht]
	\caption{Accuracy and overhead of energy disaggregation, using Sparse Switching Event Recovering (SSER), Least Square Estimation (LSE) based integer programming and iterative Hidden Markov Model (HMM)}
	\centering
		\begin{tabular}{l|c c|c c c}
			\hline
			\multirow{2}*{\backslashbox{Methods}{Metrics}} & \multicolumn{2}{C{5cm}|}{\textbf{Accuracy}} & \multicolumn{3}{c}{\textbf{Overhead}}\\
			              & \emph{EDA} & \emph{SPA} & \emph{Training Size} & \emph{R.T.}(\emph{second}) & \emph{RAM}(\emph{MB}) \\
			\hline
			\textbf{SSER} & $61.12\%$ & $69.62\%$ & -- & $865.4$ & $596.8$ \\

			\textbf{LSE}  & $33.40\%$ & $45.67\%$ & -- & $619.3$ & $581.9$ \\

			\textbf{HMM} (average) & $55.27\%$ & $67.47\%$ & 2116 & $3721.3$ & $558.6$ \\
			
			\textbf{HMM} (best) & $67.26\%$ & $71.29\%$ & 600 & $1299.7$ & $557.9$ \\
			
			\textbf{HMM} (worst) & $41.09\%$ & $61.27\%$ & 3200 & $7089.6$ & $561.4$ \\
			\hline
		\end{tabular}
		\label{tab:resultComparison}
\end{table*}

\begin{figure*}[!ht]
\centering
\includegraphics[width=6.2in,height=2in]{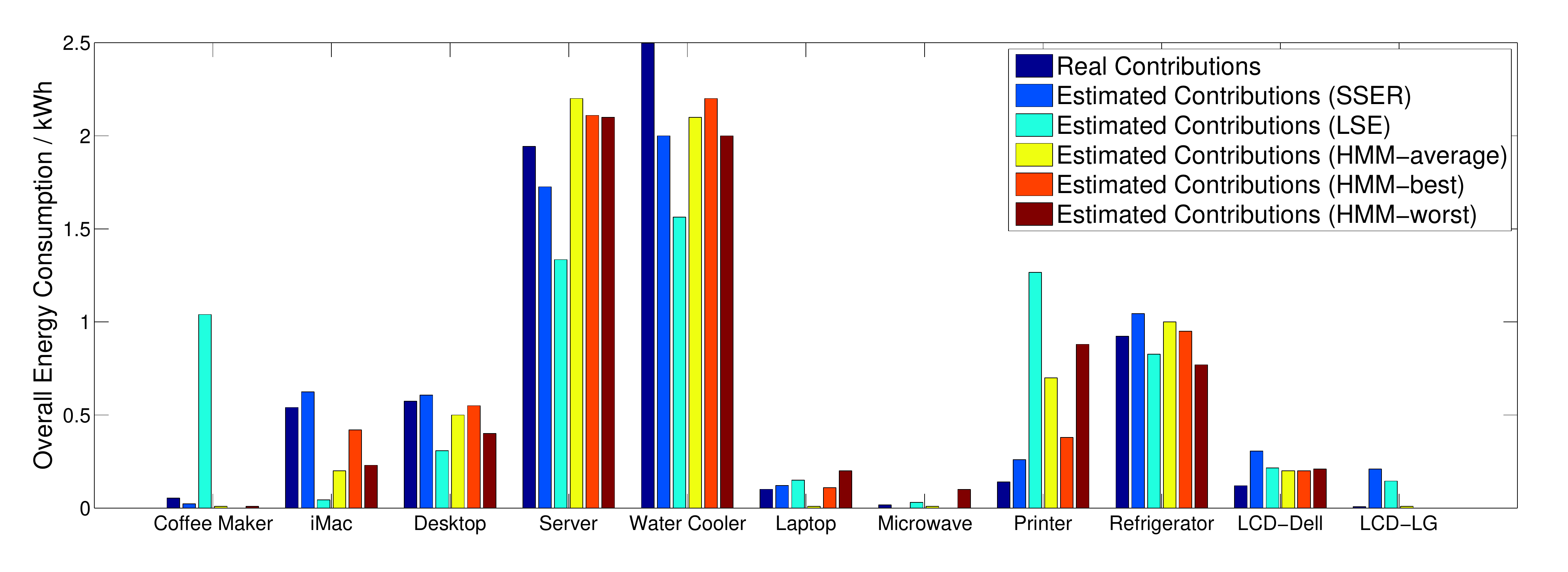}
\caption{Actual and estimated energy contributions of each appliance to the total energy consumption for one-week time period.}\label{fig:overall_consumption}
\vspace{-0.1in}
\end{figure*}

\subsection{Performance Evaluation}

To evaluate the error of energy disaggregation, the \emph{Disaggregation Error} is usually used~\cite{kolter2011redd,parson2012non,figueiredo2013regularization}. Furthermore, since we know appliances' states (\textit{i.e.}, the ground truth) in our dataset,  we also evaluate the accuracy of recovered appliances' states via \emph{Hamming Loss}~\cite{elisseeff2001kernel}. Accordingly,  we use $1-Disaggregation Error$ and $1-Hamming Loss$ to get the \emph{accuracy} of energy disaggregation and the accuracy of state estimation, respectively. In addition, we also look into the overhead of each method. The evaluation metrics are defined as follow.

\begin{itemize}
\item \emph{Energy Disaggregation Accuracy (EDA)}: It indicates the accuracy of assigning correct power values to corresponding appliances. 
\begin{equation}
EDA := 1-\frac{\sum_{n=1}^{N}\left\|X^{(n)}-\hat{S}^{(n)}P_{n}\right\|_{1}}{\left\|X\right\|_{1}},
\end{equation}
\noindent where $X^{(n)}$, $\hat{S}^{(n)}$ and $P_{n}$ represent the true energy consumption vector, the estimated state vector, and the rated power of the $n$-th appliance, respectively, and $X$ is the aggregated power vector.
\item \emph{State Prediction Accuracy (SPA)}: It indicates the accuracy of estimating the states of appliances.
\begin{equation}
SPA := 1- \frac{\sum_{n=1}^{N}\left\|S^{(n)}-\hat{S}^{(n)}\right\|_{1}}{N\cdot{T}},
\end{equation}
\noindent where $S^{(n)}, \hat{S}^{(n)}$ represent the true state vector and the estimated state vector of the $n$-th appliance, respectively, and $N,T$ represent the number of appliances and the number of samples, respectively.
\item Running time (\emph{R.T.}) and memory usage (\emph{RAM})\footnote{We implemented the three methods with \emph{Matlab} $8.0$ and run them with $32$-bit Windows OS with $3.4 GHz$ CPU and $4 GB$ RAM.}: They indicate the overhead on running time and memory space, respectively.
\end{itemize}

Since the performance of the iterative HMM method depends on model training, we run this method multiple times over different sizes (w.r.t. number of samples) of training datasets (denoted as \emph{training size}). To be specific, we changed the training size from $200$, increased by $200$ each time, up to $4000$. The average performance is calculated over all the runs, and the best and the worst performance is the best and the worst outcomes among all the runs, respectively.   

The performance of the three methods on energy disaggregation are summarized in Table~\ref{tab:resultComparison}. In addition, as illustrated in Fig.~\ref{fig:overall_consumption}, we also look into the \emph{overall energy disaggregation accuracy} of the three methods, which indicates the energy contribution of each appliance to the total energy consumption in the whole time period.

From the results, we can draw the following conclusions.
\begin{itemize}
\item In term of accuracy, our SSER method performs much better than the LSE based method and slightly better than the iterative HMM method in average. \nop{for either the individual accuracy (\emph{EDA} and \emph{SPA}) for each appliance at each time instant or the overall accuracy in the whole time period.}
\item In term of overhead, our SSER method and the LSE method are at a comparative level for running time and system memory usage. While the memory usage of the iterative HMM method is similar to that of the other two methods, its running time is much longer. \nop{while with extra training, the HMM method consumes much more time than the other two, even though it takes slightly less system memory usage.}
\item The performance of the iterative HMM method is subject to the training process and may have a large variation in accuracy and running time.  
\end{itemize}

\subsection{Robustness Test}

Regarding the iterative HMM method, as shown in Table~\ref{tab:resultComparison}, we have found that 1) the gap between the best and the worst outcomes is significant, and 2) there is no direct relationship between the size of training dataset and the estimation accuracy.  These indicate that the iterative HMM method is sensitive to  parameter estimation in the training phase.  Consequently, when using this method in practice, it is not easy  to estimate appropriate model parameters that can guarantee the performance. This problem is severe especially when the ground truth of energy disaggregation is unknown and thus it is hard to judge whether or not a trained model is good enough.

\nop{
\begin{figure}[!ht]
\centering
\includegraphics[width=3.4in,height=1.4in]{hmmResult.pdf}
\caption{The accuracy metrics resulted from the HMM method over different size of training datasets.}\label{fig:hmmResult}
\end{figure}
}

The above findings motivate us to perform robustness test on our method. In practice, we have shown in Section~\ref{subsec:powerSplitting} that the rated power of an appliance can be easily learned. However, we may not precisely estimate the power deviation of an appliance working under a certain mode. Due to this consideration, we test the performance of our method, assuming that the power deviations of appliances are not accurate.

For this test, we replace $\Theta$ with $\rho\cdot{\Theta}$, so that we can narrow down or widen up the estimated power deviations by regulating $\rho$. The value of $\rho$ is changed from $0.8$ to $1.2$, causing a parametric error of power deviation up to $20\%$.

\begin{table}[!ht]
	\caption{Accuracy of Energy Disaggregation using SSER, with inaccurate estimation on power deviation}
	\centering
		\begin{tabular}{c| m{2cm} m{2cm}}
			\hline
			\backslashbox{$\rho$}{Metrics} & \emph{EDA} & \emph{SPA} \\
			\hline
						$0.8$ & $55.28\%$ & $70.37\%$ \\
						$0.9$ & $60.33\%$ & $70.27\%$ \\	
						$1.0$ & $61.12\%$ & $69.62\%$ \\	
						$1.1$ & $56.94\%$ & $71.15\%$ \\	
						$1.2$ & $59.59\%$ & $72.24\%$ \\
			\hline
		\end{tabular}
		\label{tab:faultTolerance}
\end{table}

\nop{
\begin{table*}[!ht]
	\caption{Accuracy of Energy Disaggregation using SSER, with inaccurate estimated parameter of power deviation}
	\centering
		\begin{tabular}{|c|c| m{1.8cm} m{1.8cm} m{1.8cm} m{1.8cm} m{1.8cm} |}
			\hline
			\multicolumn{2}{|r|}{\backslashbox{Metrics}{$\rho$}} & $0.8$ & $0.9$ & $1.0$ & $1.1$ & $1.2$  \\
			\hline
			\textbf{Accuracy} & \emph{EDA} & $55.37\%$ & $60.12\%$ & $60.49\%$ & $56.12\%$ & $57.41\%$  \\
			
			(before data cleansing) & \emph{SPA} & $65.71\%$ & $68.01\%$ & $69.03\%$ & $70.72\%$ & $69.10\%$  \\
			\hline
			\textbf{Accuracy} & \emph{EDA} & $55.28\%$ & $60.33\%$ & $61.12\%$ & $56.94\%$ & $59.59\%$  \\
			
			(after data cleansing)  & \emph{SPA} & $70.37\%$ & $70.27\%$ & $69.62\%$ & $71.15\%$ & $72.24\%$  \\
			\hline
		\end{tabular}
		\label{tab:faultTolerance}
\end{table*}
}

Part of the outcomes are shown in Table~\ref{tab:faultTolerance}. We can see that the accuracy does not change too much when the parameter error varies, indicating that our method is robust to  parameter estimation.

\section{Further Discussion: Limitation of Our Method}

While our method is effective and simple, it has the following limitations. 

First, we assumed that the types and the number of appliances in a house are known \textit{a prior}. While this assumption is reasonable if our method is used by the residents, it may not hold for the utility side because the household may be concerned of privacy and thus unwilling to provide the above information. In this case, our method may not work well; at least it will need the help of other sophisticated methods to detect the appliances used in the house first.   

Second, our method cannot automatically adjust if new appliances are added to or existing appliances are removed from the house. This problem is similar to the first one. We need to assume that the household is collaborative and should inform the system whenever there is a change on the major appliances. Otherwise, our method would not work well. 

Finally, we assumed that the set of appliances is stable during the time in consideration. This may not be always true, since a house may include some small \textit{ad hoc} devices, \textit{e.g.}, mobile phones, which are charged and then unplugged from the power sockets. Fortunately, these ad hoc devices may not have perceptible impact on the energy disaggregation results in practice, because their energy consumption is usually not high compared to other stable appliances such as refrigerator and stove.  

\section{Conclusions}

Most existing approaches for energy disaggregation either require complex appliances' signatures or use machine learning techniques to train a ``good" model. Their effectiveness has been questioned by the lack of commonly-accepted signatures or by the fluctuation in estimation results due to the difficult model training process. It has been challenging to develop a simple and broadly applicable method in this important application domain.  

In this paper, we proposed a simple, universal model for energy disaggregation.  We only make use of readily available information of appliances, \textit{e.g.}, those from users' manual, technical specification, or some public websites. We built a sparse switching event recovering model, based on the sparsity of appliances' switching events. Furthermore, we used the active epochs of switching events to develop a parallel local optimization algorithm to solve our model efficiently. In addition to analyzing the complexity and correctness of our algorithm, we tested our method with the real-world trace data from an energy monitoring platform we deployed, which records the power readings from a group of household appliances. The test results demonstrated that our method can achieve better performance than the state-of-the-art solutions, including the Least Square Estimation (LSE) method and the machine learning method using iterative Hidden Markov Model (HMM). 

\bibliographystyle{abbrv}
\bibliography{reference}  


\appendix
\section{Proof of NP-hardness of SSER}\label{Appendix_A}

\subsection{Preparation}
First, we introduce a tree structure $\mathcal{T}(M, T)$, which is a complete $M$-ary tree with height of $T$, \textit{i.e.}, every internal node has exactly $M$ children and all leaves have the same depth of $T$. By default, the height of the root is $0$. Furthermore, each edge $(i,j)$ of $\mathcal{T}$ has a non-negative cost $c(i,j)$, which will be defined later.  

Assume that the aggregated power readings from time $t=1$ to $T$ are generated by $N$ appliances whose rated power and power deviation are known. Given the initial state of all appliances $S_0$, we can build the following tree:

\textbf{Step 1.} Set $S_0$ as the root of the tree.

\textbf{Step 2.} For each leaf node $S_{i}$ (or $S_{0}$ in the first iteration), set its children as all possible states that can be transited from $S_i$. As a result, we can add $M$ children to $S_i$, where $M =2^{N}$.

\textbf{Step 3.} Set the edge cost between $S_{i}$ and its child $S_{j}$ as
\begin{equation*}
c(i,j) = \left\{ \begin{array}{ll} 
\left\|S_j - S_i\right\|_{1} \text{ , } \text{ if }S_j \text{ satisfies } (\ref{eqt:Constraints})\\
\infty \text{ , } \text{ else. }
\end{array} \right.
\end{equation*}

\textbf{Step 4.} Repeat Step 2 to Step 4 from $t=1$ to $T$. At the end, we construct a tree $\mathcal{T} (M, T)$, where $M = 2^{N}$.

Thus, we can translate SSER into the problem of finding the minimum-cost path in $\mathcal{T}(M, T)$ from the root to a leaf (we call such path a \textit{full path} in the following). We reduce the optimization problem to its decision version.

\begin{definition}
\textbf{Decision version of SSER (d-SSER)}: Given a constant $k$, find out whether or not there exists a full path in $\mathcal{T}(M,T)$ with total cost no larger than a constant $k$. 
\end{definition}
The d-SSER can be re-formulated as
\begin{equation*}
\begin{split}
 \textbf{d-SSER} = \{ \left\langle \mathcal{T}, c, k \right\rangle:
& \mathcal{T}(M, T),\\
& c \text{ is the cost function },\\
& k \in \Re^{+}, \text{ and }\\
& \mathcal{T} \text{ has a full path with cost } \leq k\}.
\end{split}
\end{equation*}

We next reduce a well-known NP-complete problem, the Traveling Salesman Problem (TSP) to d-SSER. TSP can be formulated as
\begin{equation*}
\begin{split}
 \textbf{TSP} = \{ \left\langle \mathcal{G}, c', k \right\rangle:
& \mathcal{G} = (V, E) \text{ is a complete graph },\\
& c' \text{ is the cost function },\\
& k \in \Re^{+}, \text{ and }\\
& \mathcal{G} \text{ has a Hamiltonian cycle with cost}\\
& \leq k\}.
\end{split}
\end{equation*}

We complete the proof in two steps: firstly we show that d-SSER is NP; then, we prove that d-SSER is NP-complete by showing TSP $\leq_{P}$ d-SSER, \textit{i.e.}, there exists a reduction from TSP to d-SSER.

\subsection{d-SSER is NP}\label{ASS_1}

\begin{itemize}
\item \emph{Certificate:} A path of $\mathcal{T}(M,T)$.
\item \emph{Algorithm:} 1) Check that the path is full, \textit{i.e.}, the path starts from the root and ends at a leaf. 2) Calculate the total edge costs along the path and check if it is no larger than $k$.
\item \emph{Polynomial Time:} We need $T+1$ steps to check the path and obtain its total cost.
\end{itemize}

\subsection{d-SSER is NP-Complete}\label{ASS_2}

\begin{itemize}
\item Firstly, we develop an algorithm $\mathcal{F}:\left\langle \mathcal{G}, c', k \right\rangle \rightarrow \left\langle \mathcal{T}, c, k \right\rangle$, \textit{i.e.}, $\mathcal{G}$ and $c'$ in TSP can be transferred to $\mathcal{T}$ and $c$ in d-SSER as follow:

\textbf{Step 1.} Choose any node of $\mathcal{G}$ as the root of $\mathcal{T}$;

\textbf{Step 2.} For each leaf node of the current tree, add its children as all the other nodes of $\mathcal{G}$. Since $\mathcal{G}$ is a complete graph, we can add $\left|V\right|-1$ children to each leaf node, where $\left|V\right|$ is the number of nodes in $\mathcal{G}$.

\textbf{Step 3.} Repeat Step 2 for $\left|V\right|$ times. At the end, we build $\mathcal{T} (M, T)$, where $M = \left|V\right|-1$ and $T =\left|V\right|$. 

\textbf{Step 4.} Set the cost of edge $(i, j)$ in $\mathcal{T}$, $c(i,j)$, as follows: 
\begin{enumerate}
\item Initialization: $c(i,j) = c'(i,j)$, where $c'(i, j)$ is the edge cost in $\mathcal{G}$.
\item For each edge $(i,j)$ of $\mathcal{T}$ where $j$ is a non-leaf node, if $j$ has appeared in the path from the root (including the root) to $i$, \textit{i.e.}, $j$ is an ancestor of $i$ in the tree already, set $c(i, j)=\infty$.   
\item For each edge $(i,j)$ of $\mathcal{T}$ where $j$ is a leaf node, if $j$ is not the same as the root node, set $c(i, j)=\infty$.
\end{enumerate} 

To help understand the construction of $\mathcal{T}$ with $\mathcal{G}$, Fig.~\ref{fig:example} shows an example with three nodes in $\mathcal{G}$. 

\begin{figure}[!ht]
\centering
\includegraphics[width=3in,height=1in]{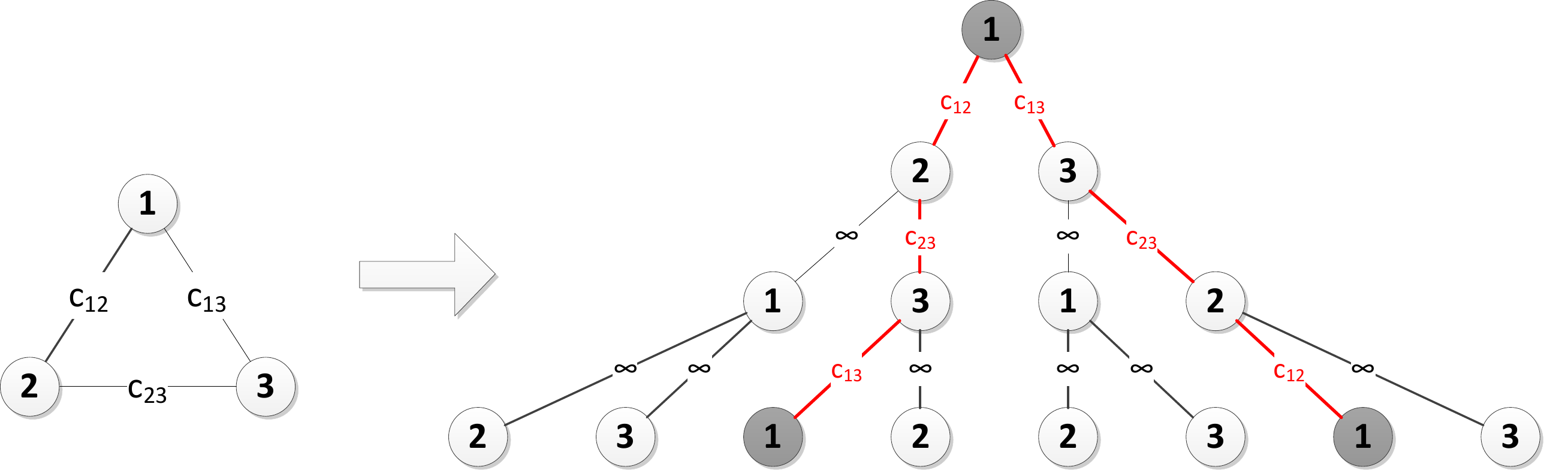}
\caption{An example showing the construction of $\mathcal{T}$ with $\mathcal{G}$}\label{fig:example}
\end{figure}

\item Secondly, it is easy to see that $\mathcal{F}$ takes $O(T^2)$ running time. 

\item Thirdly, we show that 

\begin{equation*}
\left\langle \mathcal{G}, c', k \right\rangle \in \text{TSP} \Leftrightarrow \left\langle \mathcal{T}, c, k \right\rangle \in \text{d-SSER}.
\end{equation*}

\begin{itemize}
\item $(\Rightarrow)$
\end{itemize}
\begin{equation*}
\begin{split}
    &\mathcal{G} \text{ has a Hamiltonian cycle with cost} \leq k.\\
		&\Rightarrow \text{there exists a full path in } \mathcal{T} \text{ with cost } \leq k. \\ 
		&\text{(Note that there will be no internal node along} \\ 
		&\text{the path occurring more than once, otherwise}\\
		&\text{the cost will be infinite based on the rules in Step 4.) }\\
\end{split}
\end{equation*}
\begin{itemize}
\item $(\Leftarrow)$
\end{itemize}
\begin{equation*}
\begin{split}
    &\mathcal{T} \text{ has a full path with cost } \leq k.\\
		&\Rightarrow \text{there exists a traverse instance in its}\\
		&\text{corresponding graph } \mathcal{G} \text{ with cost } \leq k.\\
		&\text{(Note that based on the tree construction,}\\
		&\text{only the full paths starting and ending at the}\\
		& \text{same node can have a cost no larger than } k,\\
		&\text{because other paths have a cost of infinity.)}\\
		&\Rightarrow \text{so } \mathcal{G} \text{ has a Hamiltonian cycle with cost } \leq k.
\end{split}
\end{equation*}

\end{itemize}

With above facts, we have proved that d-SSER is NP-complete. Since SSER problem is no easier than d-SSER, the former is NP-hard.

\section{Computational Complexity}\label{Appendix_B}

For the problem in~(\ref{eqt:Minimization_1}), if we want to find the minimum total variation with a brute-fore method, we have to traverse all possible solutions for state matrix $S$. On each time instant, there are $2^{N}$ possible combinations of states for $N$ appliances. Therefore, from time $t=1$ to $T$, the total number of feasible solutions is up to $\left(2^{N}\right)^{T}$. Thus, the computational complexity of brute-fore method to problem~(\ref{eqt:Minimization_1}) is $O(2^{N\cdot{T}})$, which is exponential.

As to the optimization problem in~(\ref{eqt:Minimization_2}), the whole searching space is limited to $k$ small local windows, and the optimization is confined within the local windows. Assume that the longest active epoch has a size of $\ell$. The searching space is $2^{N\cdot{\ell}}$. Since from $t=1$ to $T$ the total number of active epochs is $k$, the total computational complexity is upper bounded by $k\cdot{2^{N\cdot{\ell}}}$. Considering that $\ell$ is usually much smaller than $T$, the computational complexity of the original problem is  cut down significantly.
\balancecolumns

\end{document}